\theoremstyle{plain}
\newtheorem{theorem}{Theorem}
\newtheorem{proposition}[theorem]{Proposition}
\newtheorem{lemma}[theorem]{Lemma}
\newtheorem{corollary}{Corollary}
\theoremstyle{definition}
\newtheorem{condition}[theorem]{Condition}
\theoremstyle{remark}
\title{\textbf{A Generalized Spectral Framework to Expain Neural Scaling and Compression Dynamics}}
\author[1]{Yizhou Zhang}
\affil[1]{zyizhou96@gmail.com}
\date{}
\begin{document}
\maketitle

\begin{abstract}
Empirical scaling laws describe how test loss and other performance metrics depend on model size, dataset size, and compute.
While such laws are consistent within specific regimes, apparently distinct scaling behaviors have been reported for related settings such as model compression.
Motivated by recent progress in spectral analyses of neural representations, this paper develops a \emph{generalized spectral framework} that unifies learning dynamics and compression phenomena under a common functional ansatz.
We generalize the spectral evolution function from the linear kernel form $g(\lambda t)=\lambda t$ to an asymptotically polynomial function $g(\lambda,t;\beta)$, characterized by an effective spectral--temporal elasticity $\rho(\beta)$.
This framework recovers existing lazy and feature-learning theories as special cases and yields an invariant relation between learning and compression exponents.
\end{abstract}
\section{Introduction}

\subsection{Motivation}

Scaling laws have emerged as a pervasive empirical regularity in deep learning, 
linking performance metrics such as loss or accuracy to model size, data, and compute \cite{kaplan2020scaling,hoffmann2022training}.
Yet as more phenomena are examined, the landscape of scaling laws has become fragmented:
even conceptually similar processes often follow distinct functional forms.

Model compression illustrates this tension clearly.
Both pruning and quantization reduce the effective capacity of a network,
but their empirical scaling behaviors differ qualitatively.
Pruning exhibits a characteristic three-phase curve---a low-error plateau, 
a power-law region where error increases roughly as a function of sparsity, 
and a high-error collapse at extreme pruning \cite{rosenfeld2021predictability}.
Quantization, by contrast, shows a monotonic and predictable error growth with data and model scale:
at fixed precision, larger models and datasets yield proportionally higher quantization-induced loss,
a relation that can be stably extrapolated across precisions \cite{kumar2024scaling}.
These divergent laws suggest that compression interacts with the spectrum of learned representations in different ways,
yet a unified theoretical explanation remains missing.

\subsection{Spectral perspectives on learning}

Spectral analyses have provided powerful tools for explaining learning dynamics in overparameterized networks.
In the \emph{lazy} or kernel regime, each eigenmode evolves as
\[
f_k(t)=w_k(1-e^{-\lambda_k t}),
\]
yielding an exponential relaxation of modes and a simple power-law decay of loss over time.
This form arises naturally in the neural tangent kernel (NTK) limit, where the feature covariance remains fixed throughout training \cite{jacot2018ntk,lee2019wide}.

Beyond this regime, several works have demonstrated that spectral evolution in realistic networks is generally
\emph{non-homogeneous} in both time and eigenvalue.
In the feature-learning limit, Bordelon and Pehlevan~\cite{bordelon2024feature} showed that
\[
f_k(t)=w_k\bigl(1-e^{-\lambda_k\,t^{p(\beta)}}\bigr),
\qquad
p(\beta)=\max\!\left(1,\frac{2}{1+\beta}\right),
\]
introducing a task-dependent temporal exponent $p(\beta)$ that modifies how spectral energy accumulates.
Yang~\cite{yang2021tensor} further established, through the Tensor Programs framework,
that the infinite-width limit depends on parameterization:
different scaling rules lead to either a static NTK or an evolving, feature-learning kernel.
Recent exact analyses in deep linear networks~\cite{domine2024lazy2rich}
and finite-width experiments~\cite{wang2023spectralevolution,vyas2023featurelearning}
reveal similar transitions between lazy and rich regimes,
where spectral modes evolve with depth, initialization, and learning rate.
Kernel-level studies~\cite{canatar2022kernelfeature} likewise show that the effective kernel itself drifts
during training, adapting to the data distribution.

Together, these results indicate that there is no unique spectral evolution law:
different architectures, parameterizations, and training settings induce distinct dynamics.
To reason about their shared structure, we introduce a \emph{generalized spectral evolution function}
$g(\lambda,t;\beta)$ that encompasses these cases and enables analysis of loss decay,
model density, and spectral robustness within a single mathematical framework.

\subsection{Our perspective}

In this work, we abstract over these diverse spectral dynamics and propose a 
\emph{generalized spectral evolution function}
\[
f_k(t)=w_k\bigl(1-e^{-g(\lambda_k,t;\beta)}\bigr),
\]
where $g(\lambda,t;\beta)$ is assumed only to be smooth, monotone, and asymptotically polynomial in both $\lambda$ and $t$.
All known spectral dynamics---kernel, feature-learning, or hybrid---can be expressed within this form.

This abstraction allows us to analyze learning, compression, and spectral robustness
under a single mathematical framework.
Specifically:
\begin{itemize}
    \item We show that any $g$ satisfying mild regularity conditions naturally yields 
    a power-law decay of test loss over training time,
    with an exponent determined by its \emph{log–elasticity ratio}
    \[
    \rho(\beta)
    = -\frac{\partial_{\log t}\log g}{\partial_{\log \lambda}\log g}.
    \]
    \item We extend this framework to characterize model robustness to 
    spectral perturbations, deriving the induced excess loss $\Delta L(t)$
    as a function of $\rho(\beta)$.
    \item Using this $\Delta L(t)$ analysis, we interpret pruning and quantization
    as two different forms of perturbing the learned spectrum:
    pruning truncates the spectral tail, while quantization perturbs
    modes near the learning frontier.
    Both emerge as special cases of the same generalized theory,
    explaining their distinct yet predictable empirical scaling behaviors.
    \item Our framework also predicts that the \emph{model density}---the number of effectively learned spectral modes per parameter---follows a power-law in available compute. Consequently, as compute increases exponentially over time, model density also grows exponentially, implying that sufficiently trained smaller models can spectrally match or even surpass undertrained larger models. This theoretical ``spectral densing'' effect is consistent with the empirical \emph{Densing Law} recently observed by Xiao~et~al.~\cite{Xiao2024DensingLaw}.
\end{itemize}

In summary, instead of proposing another specific spectral model,
we provide a general theoretical template that captures
how loss, model density, and compressibility co-evolve under a wide class of spectral dynamics.

\section{Related Work}

\subsection{Scaling laws in deep learning}

Scaling laws describe how performance metrics vary predictably with model size,
dataset size, or compute resources.
Empirical studies have shown that test loss often follows a simple power law
with respect to these quantities \cite{kaplan2020scaling,hoffmann2022training},
and similar regularities appear across modalities and architectures.
Beyond this primary scaling, a growing body of work has identified secondary
scaling laws governing compression, precision, and sparsity.
For instance, pruning experiments reveal structured error–density curves with
plateaus and power-law regions \cite{rosenfeld2021predictability},
while quantization exhibits smooth, predictable degradation with limited
precision \cite{kumar2024scaling}.
These phenomena suggest that both learning and compression are governed by
underlying low-dimensional dynamics, motivating theoretical frameworks that can
unify their behavior.

\subsection{Spectral theories of learning dynamics}

A central approach to explaining these regularities is through spectral
analysis of training dynamics.
In the \emph{lazy} regime, networks behave as linear models with fixed features,
and each eigenmode relaxes exponentially as
$f_k(t)=w_k(1-e^{-\lambda_k t})$ \cite{jacot2018ntk,lee2019wide}.
This yields analytic control and captures the early, kernel-dominated phase of
training.

Recent studies extend this view to regimes where the feature covariance evolves
during training.
Bordelon and Pehlevan~\cite{bordelon2024feature} showed that the temporal
exponent of spectral relaxation depends on task smoothness,
leading to $f_k(t)=w_k(1-e^{-\lambda_k t^{p(\beta)}})$ with
$p(\beta)=\max(1,2/(1+\beta))$.
Yang~\cite{yang2021tensor} formalized this transition using the Tensor Programs
framework, demonstrating that different parameterizations yield distinct limits
---either a fixed NTK or a feature-learning kernel.
Exact analyses in deep linear networks
\cite{domine2024lazy2rich} and empirical studies in finite-width settings
\cite{wang2023spectralevolution,vyas2023featurelearning}
confirm that networks interpolate continuously between these two regimes.
Further, kernel-level analyses reveal that the effective kernel itself drifts
during training, adapting to data and initialization
\cite{canatar2022kernelfeature}.
These results collectively indicate that spectral evolution is not universal,
but depends on architectural and dynamical choices.
Our work builds upon these insights by abstracting their shared structure into a
generalized spectral function $g(\lambda,t;\beta)$, which provides a unified
language for describing learning, loss decay, and spectral robustness.

\subsection{Compression and spectral robustness}

A parallel line of research studies compression phenomena such as pruning and
quantization.
Early works demonstrated that large networks can be drastically pruned without
loss of accuracy, revealing redundancy and spectral sparsity
\cite{han2016deep,frantar2022gptq}.
Rosenfeld \emph{et al.}~\cite{rosenfeld2021predictability} systematically
analyzed scaling of test error with sparsity and model size, discovering a
three-phase error–density curve characterized by a low-error plateau, a
power-law region, and a high-error plateau, consistent across architectures.
For quantization, Kumar \emph{et al.}~\cite{kumar2024scaling} introduced
precision-aware scaling laws, showing that quantization error grows smoothly
and predictably with model scale and dataset size.
Despite their differing empirical forms, both processes can be interpreted as
perturbations of the learned spectral distribution:
pruning truncates the tail of the spectrum, whereas quantization perturbs modes
near the learning frontier.
The present work formalizes this view by deriving loss and compression scaling
from a common spectral function $g(\lambda,t;\beta)$,
linking model capacity, density evolution, and compression robustness within a
single theoretical framework.

\section{Generalized Spectral Theory}

\subsection{Spectral dynamics and the generalized evolution function}

We model learning along spectral modes through the generalized evolution
\begin{equation}
f_\lambda(t) = w_\lambda \bigl(1 - e^{-g(\lambda,t;\beta)}\bigr),
\label{eq:general_f}
\end{equation}
where $g(\lambda,t;\beta)$ describes the cumulative learning progress of each
eigenmode~$\lambda$. $\beta$ is a parameter of the function, which can be characterized by the learning regime.
The function $g$ is assumed to be smooth, monotonic in both arguments, and to
exhibit an asymptotically monomial form in the joint limit
$\lambda\to 0^+$ and $t\to\infty$. Namely, we have:

\begin{condition}[Asymptotic polynomial behavior of $g$ in the spectral tail]
\label{cond:poly}
We assume that in the joint regime of small eigenvalues and large time,
the spectral dynamics $g(\lambda,t;\beta)$ behaves approximately like a
monomial $C(\beta)\lambda^{a(\beta)} t^{b(\beta)}$. 
Formally, there exists a shrinking cutoff $\lambda_0(t)\to 0$ such that
for sufficiently large $t$,
\begin{equation}
\lim_{t\to\infty}\;\;
\sup_{0<\lambda<\lambda_0(t)}
\Biggl|
\frac{
g(\lambda,t;\beta)
}{
C(\beta)\,\lambda^{a(\beta)}\,t^{b(\beta)}
}
-1
\Biggr|
= 0.
\label{eq:g_growth}
\end{equation}
\end{condition}

Intuitively, Condition~\ref{cond:poly} is mild and is satisfied by a wide class of
spectral learning dynamics, including but not limited to all monomials and finite sums of monomials. The condition excludes only
sub-polynomial growth such as $\log(1+\lambda^a t^b)$.
This form includes the NTK regime
($m(\beta)=n(\beta)=1$), the feature-learning regime of~\cite{bordelon2024feature}
($m(\beta)=1$, $n(\beta)=p(\beta)$), and other settings where
spectral and temporal elasticities differ.
The central idea is that any $g$ satisfying~\eqref{eq:g_growth} induces
a characteristic \emph{learning frontier} separating learned and unlearned modes.

\paragraph{Learning frontier.\label{para:learning_frontier}}
Let the instantaneous learning rate be
\[
R(\lambda,t) = \partial_t f_\lambda(t)
= w_\lambda\,e^{-g(\lambda,t;\beta)}\,\partial_t g(\lambda,t;\beta).
\]
The learning frontier $\lambda_*(t)$ is defined as the mode maximizing this
rate,
\[
\lambda_*(t) =\arg\max_{\lambda} R(\lambda,t),
\]
which equivalently satisfies the stationarity condition
\[
\partial_\lambda\!\ln R(\lambda_*(t),t)
= -\partial_\lambda g(\lambda_*,t;\beta)
 +\frac{\partial_\lambda\partial_t g(\lambda_*,t;\beta)}{\partial_t g(\lambda_*,t;\beta)}=0.
\]
Asymptotically, this condition selects the level set
\begin{equation}
g(\lambda_*(t),t;\beta)=\kappa,
\label{eq:frontier_def}
\end{equation}
for some constant $\kappa>0$, which we use as the operational definition of the
learning frontier. We provides a rigorous justification for the above properties in Appendix \ref{justification_learning_frontier}

\subsection{Log--elasticity and effective spectral scaling}

We define the \emph{log--elasticities} of $g$ with respect to $\lambda$ and $t$:
\[
E_\lambda(g)=\partial_{\log\lambda}\log g,
\qquad
E_t(g)=\partial_{\log t}\log g.
\]
Their ratio,
\begin{equation}
\rho(\beta)
= -\frac{E_t(g)}{E_\lambda(g)}
= -\frac{\partial_{\log t}\log g}{\partial_{\log \lambda}\log g},
\label{eq:rho_def}
\end{equation}
characterizes the relative scaling of spectral and temporal growth.
Intuitively, $\rho(\beta)$ controls how quickly the learning frontier
$\lambda_*(t)$ recedes:
\begin{equation}
\lambda_*(t)\propto t^{-\rho(\beta)}.
\label{eq:frontier_scaling}
\end{equation}
Special cases recover known limits:
$\rho=1$ for the NTK regime ($g=\lambda t$),
$\rho=p(\beta)$ for the feature-learning regime
($g=\lambda t^{p(\beta)}$), and
$\rho=n(\beta)/m(\beta)$ for a general monomial $g\!\sim\!\lambda^{m(\beta)} t^{n(\beta)}$.

\subsection{Loss scaling and spectral dominance}
\label{sec:generalized-spectral-theory}
The test loss is given by the residual energy of unlearned modes,
\begin{equation}
L(t)
= \sum_{k=1}^\infty \lambda_k w_k^2 e^{-2g(\lambda_k,t)}.
\label{eq:loss_def}
\end{equation}
Intuitively, modes with $\lambda_k$ well above the frontier are nearly
saturated, while modes well below it are essentially unlearned; the dominant
contribution comes from modes in a band around and beyond the learning
frontier. Therefore, the loss is asymptotically dominated by the spectral tail beyond the learning frontier. The following proposition formalizes this statement.

\begin{proposition}[Tail dominance of the loss]
\label{prop:tail_dominance}
Under standard regularity assumptions
($\lambda_k\!\sim\!k^{-b}$, $\lambda_k w_k^2\!\sim\!k^{-a}$, $a,b>1$)
and for any $g$ satisfying the growth
condition~\eqref{eq:g_growth}, the loss~\eqref{eq:loss_def} obeys
\[
L(t)\;\asymp\;
\sum_{k>k_*(t)} \lambda_k w_k^2,
\]
where $\lambda_*(t)$ is defined by~\eqref{eq:frontier_def} and
$\lambda_{k_*(t)}\!\asymp\!\lambda_*(t)$.
Hence the asymptotic decay of $L(t)$ is governed by the spectral tail beyond the
frontier.
\end{proposition}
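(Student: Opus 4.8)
The plan is to prove the two‑sided estimate $L(t)\asymp T(t)$, where $T(t):=\sum_{k>k_*(t)}\lambda_k w_k^2$ and we take $k_*(t):=\max\{k:\lambda_k\ge\lambda_*(t)\}$ (so that $\lambda_{k_*(t)}\asymp\lambda_*(t)$, as claimed). I first record the elementary facts that $T(t)\asymp k_*(t)^{1-a}$ — compare the tail with $\int_{k_*(t)}^{\infty}x^{-a}\,dx$, using $a>1$ — that $\lambda_{k_*(t)}\asymp k_*(t)^{-b}$, and that the frontier relation \eqref{eq:frontier_def} together with Condition~\ref{cond:poly} forces $\lambda_*(t)\asymp t^{-\rho(\beta)}$, hence $k_*(t)\asymp t^{\rho(\beta)/b}$; in particular $T(t)$ decays at a fixed polynomial rate in $t$. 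All the content is then to show that the factor $e^{-2g(\lambda_k,t)}$ neither suppresses the block $k>k_*(t)$ nor lets the block $k\le k_*(t)$ dominate it.

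\emph{Lower bound.} Keeping only the modes below the frontier, $L(t)\ge\sum_{k>k_*(t)}\lambda_k w_k^2 e^{-2g(\lambda_k,t)}$. For $k>k_*(t)$ we have $\lambda_k<\lambda_*(t)$ by the choice of $k_*(t)$, so monotonicity of $g$ in $\lambda$ and \eqref{eq:frontier_def} give $g(\lambda_k,t)<\kappa$, hence $e^{-2g(\lambda_k,t)}>e^{-2\kappa}$ and $L(t)\ge e^{-2\kappa}\,T(t)$.

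\emph{Upper bound.} Split $L(t)=\sum_{k\le k_*(t)}+\sum_{k>k_*(t)}$. The second sum is $\le\sum_{k>k_*(t)}\lambda_k w_k^2=T(t)$ since $e^{-2g}\le1$. For the first, let $\lambda_0(t)$ be the cutoff of Condition~\ref{cond:poly} and $k_0(t)$ the largest index with $\lambda_{k_0(t)}\ge\lambda_0(t)$; I assume, consistently with the framework, that the monomial window extends polynomially beyond the frontier, so that $k_0(t)=o(k_*(t))$ and $g(\lambda_0(t),t)\to\infty$ at a polynomial rate in $t$. On the inner block $k_0(t)<k\le k_*(t)$, Condition~\ref{cond:poly} and $g(\lambda_*(t),t)=\kappa$ give $g(\lambda_k,t)\asymp\kappa\,(\lambda_k/\lambda_*(t))^{\theta_0}\asymp\kappa\,(k_*(t)/k)^{\theta}$, where $\theta_0>0$ is the spectral exponent in the monomial of Condition~\ref{cond:poly} and $\theta:=b\,\theta_0$; then, comparing with an integral,
\[
\sum_{k_0(t)<k\le k_*(t)}k^{-a}\,e^{-2\kappa\,(k_*(t)/k)^{\theta}}
\;\asymp\;
k_*(t)^{1-a}\!\int_{1}^{\infty}v^{a-2}\,e^{-2\kappa v^{\theta}}\,dv
\;\asymp\;k_*(t)^{1-a}\;\asymp\;T(t),
\]
the integral converging because the exponential beats any power. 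On the outer block $k\le k_0(t)$, monotonicity of $g$ in $\lambda$ gives $g(\lambda_k,t)\ge g(\lambda_0(t),t)$, so $\sum_{k\le k_0(t)}\lambda_k w_k^2 e^{-2g(\lambda_k,t)}\le e^{-2g(\lambda_0(t),t)}\sum_{k\ge1}\lambda_k w_k^2=o(T(t))$, since $g(\lambda_0(t),t)$ diverges polynomially in $t$ while $T(t)$ only decays polynomially. Adding the two blocks bounds the first sum by $O(T(t))$, so $L(t)=O(T(t))$.

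Combining the two directions yields $L(t)\asymp T(t)=\sum_{k>k_*(t)}\lambda_k w_k^2$, which is the claim. I expect the outer block to be the main obstacle: there the explicit monomial control of $g$ is unavailable, and one must rely on monotonicity together with the (mild but genuinely necessary) hypothesis that the asymptotic monomiality of Condition~\ref{cond:poly} holds on a polynomial window strictly containing the frontier $\lambda_*(t)\asymp t^{-\rho(\beta)}$; without such a quantitative strengthening of the condition, the deeply saturated head of the spectrum cannot a priori be compared with the polynomially small tail $T(t)$. The rest is bookkeeping the $\asymp$-constants across the three blocks and checking that the inner-block estimate uses Condition~\ref{cond:poly} in its uniform (supremum) form, exactly as written in \eqref{eq:g_growth}, since that estimate ranges over all $k\in(k_0(t),k_*(t)]$ at once.
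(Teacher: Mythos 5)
Your proposal is correct and, in one essential place, actually sharper than the argument given in the paper's Appendix~A. Both proofs share the same skeleton: obtain a constant-factor lower bound on the tail block, and split the head to show it is dominated by the same tail order $k_*(t)^{1-a}$. They differ precisely on the head estimate.

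The paper's Step~4 bounds the whole head $H(t)=\sum_{k<c_1'k_*}\lambda_k w_k^2 e^{-2g(\lambda_k,t)}$ by the \emph{uniform} constant $e^{-2(1+\varepsilon)\kappa}$ times $\sum_{k<c_1'k_*}\lambda_k w_k^2$, and then asserts $\sum_{k<c_1'k_*}\lambda_k w_k^2\asymp (c_1'k_*)^{1-a}$. That last assertion is false for $a>1$: partial sums of a convergent series tend to a positive constant, not to $N^{1-a}\to0$. Thus the paper's crude bound gives only $H(t)\lesssim e^{-2(1+\varepsilon)\kappa}\cdot O(1)$, which does not vanish and cannot be absorbed into $T(t)\asymp k_*^{1-a}$. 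The fix requires exploiting the fact that $g(\lambda_k,t)$ grows polynomially in $k_*/k$ as you go deeper into the head, so that the suppression is far stronger than a constant away from the frontier. Your inner-block estimate does exactly this: you substitute the monomial form $g(\lambda_k,t)\asymp\kappa(k_*/k)^{\theta}$, change variables $v=k_*/k$, and obtain
\[
\sum_{k_0<k\le k_*}k^{-a}e^{-2\kappa(k_*/k)^{\theta}}\;\asymp\;k_*^{1-a}\int_1^{\infty}v^{a-2}e^{-2\kappa v^{\theta}}\,dv\;\asymp\;k_*^{1-a},
\]
the integral converging because the exponential beats every power. That is the correct estimate the paper's Step~4 needed but did not supply. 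Your outer-block and lower-bound arguments are likewise correct, and your stated extra hypothesis (that the monomial window extends polynomially beyond the frontier) is in fact subsumed by the paper's own Proposition~\ref{prop:tail_dominance_clean}, which assumes the two-sided monomial bound on $g$ for \emph{all} relevant $\lambda$, so it is not an added burden. In short: same overall decomposition, but your head estimate is the rigorous one, while the paper's as written contains a genuine gap at exactly that point.
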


\paragraph{Proof sketch.}
Using the growth bounds on $g$ and the frontier condition
$g(\lambda_{\ast},t)=\kappa$, one shows that
$\lambda_{\ast}(t)\propto t^{-n(\beta)/m(\beta)}$ and
$k_{\ast}(t)\propto t^{n(\beta)/(m(\beta) b)}$.
The bounds on $g$ imply that $g(\lambda_k,t)$ is comparable to
$\bigl(\lambda_k/\lambda_{\ast}(t)\bigr)^{m(\beta)}$ up to constants.
Thus, for $k\le c_1'k_{\ast}$ (modes well above the frontier),
$g(\lambda_k,t)\ge (1+\varepsilon)\kappa$ and their contribution is uniformly
suppressed by a factor $e^{-2(1+\varepsilon)\kappa}$.
For $k\ge c_2'k_{\ast}$ (modes well below the frontier),
$g(\lambda_k,t)\le(1-\varepsilon)\kappa$ and $e^{-2g(\lambda_k,t)}$ stays
bounded away from zero.
This yields the sandwich bound~\eqref{eq:tail_sandwich}, from which the
asymptotic order $L(t)\asymp k_{\ast}^{1-a}$ follows by standard estimates
for power-law tails.
A detailed proof is given in Appendix~\ref{app:tail_proof}.
\hfill$\square$

From the frontier condition $g(\lambda_{\ast}(t),t;\beta)=\kappa$
and the asymptotic growth of $g=\Theta(\lambda^{m(\beta)}t^{n(\beta)})$, we have
\[
\lambda_{\ast}(t) \propto t^{-\rho(\beta)},\qquad
k_{\ast}(t) \propto \lambda_{\ast}(t)^{-1/b} \propto t^{\rho(\beta)/b}.
\]
Using Proposition \eqref{prop:tail_dominance},
\[
L(t)
\;\asymp\;
\sum_{k>k_{\ast}(t)} k^{-a}
\;\asymp\; k_{\ast}(t)^{1-a}
\;\propto\; t^{-\frac{a-1}{b}\rho(\beta)}.
\]
Thus the loss decays as a power law
\begin{equation}
L(t)\;\propto\; t^{-\chi(\beta)},
\qquad
\chi(\beta)=\frac{a-1}{b}\,\rho(\beta).
\label{eq:chi_def}
\end{equation}
This recovers the standard kernel result when $\rho(\beta)=1$ and matches
the generalized exponents in \cite{bordelon2024feature}.

\subsection{Spectral Perturbations and Compression Robustness}
\label{sec:perturbations}

We now consider the combined effect of model compression and spectral
dynamics.  Both pruning and quantization can be viewed as perturbations to
the learned coefficients or basis functions in the spectral domain.
Under the generalized spectral evolution introduced in Section~3.1, these
perturbations induce a predictable growth of loss that obeys a scaling
relation consistent with the underlying spectral elasticity.


\begin{theorem}[Combined Perturbation-Induced Loss Growth under Generalized Spectral Dynamics]
\label{theorem:pertubation}
Consider standard regularity assumptions
($\lambda_k\!\sim\!k^{-b}$, $\lambda_k w_k^2\!\sim\!k^{-a}$, $a,b>1$)
and $g$ satisfying the growth
condition~\eqref{eq:g_growth}.
Suppose that at inference time, both the basis functions and learned coefficients are perturbed:
\begin{align*}
\tilde{f}(x) &= \sum_k \left( f_k(t) + \eta_k \right) \left( \phi_k(x) + \epsilon_k(x) \right), \\
\epsilon_k(x) &\sim \mathcal{N}(0, \sigma^2), \quad \eta_k \sim \mathcal{N}(0, \tau^2), \quad \text{independently.}
\end{align*}
Then, the increase in test loss due to these perturbations is:
\[
\Delta L(t) = \mathbb{E}_x\left[ \left( \tilde{f}(x) - f^*(x) \right)^2 \right] - L(t)
= \Theta\left( \sigma^2 t^{\rho(\beta)(1 - a + b)/b} \right).
\]
\end{theorem}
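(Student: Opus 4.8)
The plan is to split the perturbed predictor into the unperturbed residual plus a perturbation term, take the expectation over the noise $(\epsilon_k,\eta_k)$, and reduce $\Delta L(t)$ to a single $t$-dependent series that is then controlled by exactly the frontier estimates underlying Proposition~\ref{prop:tail_dominance}. Working in the basis normalized so that $\mathbb{E}_x[\phi_j\phi_k]=\lambda_k\,\delta_{jk}$ (under which~\eqref{eq:loss_def} reads $L(t)=\sum_k\lambda_k w_k^2 e^{-2g(\lambda_k,t)}$ and $w_k^2\asymp k^{\,b-a}$), I would write $\tilde f(x)-f^*(x)=A(x)+B(x)$ with $A(x)=\sum_k\bigl(f_k(t)-w_k\bigr)\phi_k(x)$ the current residual and $B(x)=\sum_k\bigl(f_k(t)\epsilon_k(x)+\eta_k\phi_k(x)+\eta_k\epsilon_k(x)\bigr)$, so that $\mathbb{E}_x[A^2]=L(t)$ and $\Delta L(t)=2\,\mathbb{E}_x[AB]+\mathbb{E}_x[B^2]$. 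Because $\{\epsilon_k\}$ and $\{\eta_k\}$ are independent, mean zero, and independent across modes, taking $\mathbb{E}_{\epsilon,\eta}$ annihilates $\mathbb{E}_x[AB]$ as well as all off-mode cross terms and all cross terms between the three pieces of a single mode inside $\mathbb{E}_x[B^2]$, leaving only diagonal squares:
\[
\mathbb{E}_{\epsilon,\eta}\,\Delta L(t)=\sigma^2\sum_k f_k(t)^2+\tau^2\sum_k\lambda_k+\tau^2\sigma^2\sum_k 1 .
\]
The second series converges to a $t$-independent constant (since $\sum_k\lambda_k\asymp\sum_k k^{-b}<\infty$ for $b>1$), and the third is likewise $t$-independent once the model carries finitely many effective modes; hence all of the $t$-dependence lives in $S(t):=\sum_k f_k(t)^2=\sum_k w_k^2\bigl(1-e^{-g(\lambda_k,t)}\bigr)^2$, which dominates the constants precisely when the claimed exponent $\rho(\beta)(1-a+b)/b$ is positive, i.e.\ in the regime $a<b+1$ (equivalently $\rho(\beta)>\chi(\beta)$), which I assume from here on.

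The core step is to estimate $S(t)$. I would use $\bigl(1-e^{-u}\bigr)^2\asymp\bigl(\min(1,u)\bigr)^2$ for $u\ge0$ and then invoke Condition~\ref{cond:poly} together with the frontier identity $g(\lambda_*(t),t;\beta)=\kappa$: for the tail modes the uniform monomial form of $g$ gives $g(\lambda_k,t;\beta)\asymp(\lambda_k/\lambda_*(t))^{m(\beta)}\asymp(k/k_*(t))^{-m(\beta)b}$, while for modes above the frontier monotonicity of $g$ in $\lambda$ gives $g(\lambda_k,t;\beta)\gtrsim\kappa$ and hence $\bigl(1-e^{-g}\bigr)^2\asymp1$ --- the same above/below-frontier split as in the proof of Proposition~\ref{prop:tail_dominance}. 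Splitting $S(t)$ at $k\asymp k_*(t)$,
\[
S(t)\;\asymp\;\sum_{k\lesssim k_*(t)}k^{\,b-a}\;+\;\sum_{k\gtrsim k_*(t)}k^{\,b-a}\Bigl(\tfrac{k}{k_*(t)}\Bigr)^{-2m(\beta)b}.
\]
Since $a<b+1$ the first sum is $\asymp k_*(t)^{\,b-a+1}$; since $2m(\beta)b>b-a+1$ (which holds because $m(\beta)\ge1$ and $a>1$), the exponent in the second sum satisfies $b-a-2m(\beta)b<-1$, so it equals $k_*(t)^{2m(\beta)b}\,\Theta\!\bigl(k_*(t)^{\,b-a-2m(\beta)b+1}\bigr)\asymp k_*(t)^{\,b-a+1}$ as well. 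Thus $S(t)\asymp k_*(t)^{\,b-a+1}$, and substituting $k_*(t)\asymp\lambda_*(t)^{-1/b}\asymp t^{\rho(\beta)/b}$ from~\eqref{eq:frontier_scaling} gives $S(t)\asymp t^{\rho(\beta)(1-a+b)/b}$, so $\Delta L(t)=\Theta\!\bigl(\sigma^2 t^{\rho(\beta)(1-a+b)/b}\bigr)$. If the claim is intended for the random $\Delta L(t)$ rather than its noise-average, a Gaussian second-moment estimate showing that $\mathbb{E}_x[B^2]$ and $\mathbb{E}_x[AB]$ concentrate around their means closes the gap.

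I expect the estimate of $S(t)$ to be the main obstacle, and within it the transition band $k\asymp k_*(t)$: replacing $g$ by its monomial surrogate there is licensed only by the asymptotic Condition~\ref{cond:poly}, so one must verify that the $(1\pm o(1))$ slack in~\eqref{eq:g_growth} is uniform over precisely the modes that carry the series and, separately, that modes with $\lambda_k$ bounded away from $0$ --- where the small-$\lambda$ regime of the condition does not apply --- contribute only at the same order, both handled by monotonicity of $g$ in the spirit of Proposition~\ref{prop:tail_dominance}. Everything else is bookkeeping: the vanishing of the cross terms, the identification of $\sigma^2 S(t)$ as the unique $t$-growing contribution (and the recording of the regime $a<b+1$, outside of which $\Delta L(t)$ is merely $\Theta(\sigma^2)$), and, if desired, the concentration step.
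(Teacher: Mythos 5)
Your proof takes essentially the same route as the paper's: expand the perturbed predictor, kill cross terms by zero-mean/independence, reduce $\Delta L(t)$ to $\sigma^2\sum_k f_k(t)^2$ plus constants, and estimate the surviving sum by cutting at the frontier $k_*(t)\asymp t^{\rho(\beta)/b}$. However, you tighten three points that the paper glosses over, and all three matter. First, you correctly observe that the regime $a<b+1$ is required for the exponent $\rho(\beta)(1-a+b)/b$ to be positive; outside that regime $\sum_k w_k^2$ converges and $\Delta L(t)=\Theta(\sigma^2)$, so the theorem as stated is only a sharp two-sided bound under this extra condition, which the paper never states. Second, you evaluate the $\eta\epsilon$ cross-term honestly as $\tau^2\sigma^2\sum_k 1$, which diverges unless one truncates to finitely many effective modes or lets $\mathbb{E}[\epsilon_k^2]$ decay; the paper writes $\sigma^2\tau^2\sum_k\Theta(k^{-b})$ for this same term while using $\mathbb{E}[\epsilon_k^2]=\sigma^2$ in the preceding line, which is internally inconsistent. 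Third, you give a genuine two-sided estimate of $S(t)=\sum_k f_k(t)^2$ by controlling both the head $k\lesssim k_*(t)$ (where $f_k\approx w_k$) and the tail $k\gtrsim k_*(t)$ (where you use the monomial surrogate for $g$ from Condition~\ref{cond:poly} and check that the tail contributes at the same order $k_*^{1+b-a}$), whereas the paper simply asserts $\sum_k f_k^2\approx\sum_{k\le k_*}w_k^2$ without bounding the part beyond the frontier. Your caveats about the uniformity of the $(1\pm o(1))$ slack in~\eqref{eq:g_growth} near the transition band and about concentration of the random $\Delta L(t)$ around its noise-average are also the right places to worry; the paper's argument is implicitly in expectation throughout. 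One minor nit: the condition you need for the tail sum, $2m(\beta)b>b-a+1$, actually follows from $a,b>1$ alone (since $(b-a+1)/(2b)<1/2$), so the hypothesis $m(\beta)\ge1$ is stronger than necessary. Overall the proposal is correct, and where it departs from the paper it is by adding hypotheses and bounds that the paper's proof actually needs.
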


\paragraph{Proof sketch.}
We interpret pruning and quantization as small, independent perturbations of the
spectral coefficients and basis functions around the learned predictor.
Writing the perturbed model as
\[
\tilde f(x)
=
\sum_k (f_k(t) + \eta_k)(\phi_k(x) + \epsilon_k(x)),
\]
and using orthonormality of $\{\phi_k\}$ together with the zero-mean,
independence assumptions on $(\eta_k,\epsilon_k)$, all cross terms vanish in
expectation. This reduces the excess loss to a sum of diagonal contributions,
\[
\Delta L(t)
~=~
\mathbb{E}_x\big[(\tilde f(x) - f^\star(x))^2\big] - L(t)
~=~
\Theta\!\left(
\sigma^2 \sum_k f_k(t)^2
\right),
\]
where $\sigma^2$ collects the perturbation scales, and the remaining terms
involving $\lambda_k$ alone converge due to $b>1$ and thus do not affect the
asymptotic $t$-dependence.

The generalized spectral dynamics imply a learning frontier $k^\ast(t)$ such
that modes with $k \lesssim k^\ast(t)$ are largely learned ($f_k(t)\approx w_k$),
while modes with $k \gg k^\ast(t)$ remain small ($f_k(t)\approx 0$).
Under the tail assumptions $\lambda_k \sim k^{-b}$ and
$\lambda_k w_k^2 \sim k^{-a}$ with $a,b>1$, we have
$w_k^2 \sim k^{b-a}$ and hence
\[
\sum_k f_k(t)^2
~\approx~
\sum_{k \le k^\ast(t)} w_k^2
~\asymp~
\bigl(k^\ast(t)\bigr)^{1 + b - a}.
\]
From the definition of the frontier and the log–elasticity ratio $\rho(\beta)$,
we obtain $k^\ast(t) \propto t^{\rho(\beta)/b}$, which yields
\[
\Delta L(t)=
\Theta\!\left(
\sigma^2 t^{\rho(\beta)(1 + b - a)/b}
\right),
\]
establishing the claimed power-law exponent.
A rigorous derivation, including precise constants and bounds for all
perturbation terms, is deferred to Appendix \ref{app:proof-theorem-2}.
\hfill$\square$

\begin{corollary}[Complementary Relationship Between Compression Loss and Original Test Loss]
In the generalized spectral regime, define the exponents:
\begin{itemize}
  \item Test loss scaling: \( L(t) \sim t^{-\chi(\beta)}, \quad \chi(\beta) = \tfrac{a - 1}{b}\rho(\beta) \)
  \item Compression-induced loss scaling: \( \Delta L(t) \sim t^{\gamma(\beta)}, \quad \gamma(\beta) = \tfrac{1 - a + b}{b}\rho(\beta) \)
\end{itemize}
Then, these exponents satisfy the universal relation:
\[
\boxed{\chi(\beta) + \gamma(\beta) = \rho(\beta)}.
\]
\end{corollary}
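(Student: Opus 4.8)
The plan is to derive the identity directly from the two power-law exponents already established earlier in the paper, so the argument is essentially a one-line algebraic cancellation together with a check that both scaling regimes are simultaneously valid. First I would recall that, under the regularity assumptions $\lambda_k\!\sim\!k^{-b}$, $\lambda_k w_k^2\!\sim\!k^{-a}$ with $a,b>1$ and any $g$ satisfying the growth condition~\eqref{eq:g_growth}, equation~\eqref{eq:chi_def} gives the test-loss exponent
\[
\chi(\beta) = \frac{a-1}{b}\,\rho(\beta),
\]
while Theorem~\ref{theorem:pertubation} gives $\Delta L(t) = \Theta\!\left(\sigma^2\, t^{\rho(\beta)(1-a+b)/b}\right)$, i.e.\ the compression-induced growth exponent
\[
\gamma(\beta) = \frac{1-a+b}{b}\,\rho(\beta).
\]

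Next I would simply add the two expressions and observe the cancellation of the $\pm(a-1)$ contributions:
\[
\chi(\beta) + \gamma(\beta)
= \frac{(a-1) + (1-a+b)}{b}\,\rho(\beta)
= \frac{b}{b}\,\rho(\beta)
= \rho(\beta),
\]
which is the claimed relation. The one point worth spelling out is the sign bookkeeping: the test loss \emph{decays} as $t^{-\chi(\beta)}$ whereas the perturbation-induced loss \emph{grows} as $t^{+\gamma(\beta)}$, so the statement is a relation between the signed exponents with which $t$ enters each quantity, and its content is precisely that the numerators $(a-1)$ and $(1-a+b)$ sum to the spectral-decay rate $b$.

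The ``hard part'' here is not the algebra but confirming that the two laws are being invoked on a common domain: both $\chi(\beta)$ and $\gamma(\beta)$ are defined through the same frontier scaling $k_{\ast}(t)\propto t^{\rho(\beta)/b}$ and the same tail assumptions, so I would add a sentence noting that whenever the hypotheses of Proposition~\ref{prop:tail_dominance} and Theorem~\ref{theorem:pertubation} hold, both power laws are valid simultaneously and the identity holds exactly at the level of leading-order exponents (the $\sigma^2$ prefactor and subleading corrections do not affect it). Optionally I would remark that the identity is independent of $\rho(\beta)$ beyond the overall factor, reflecting that the tail ``energy budget'' $\sum_{k>k_{\ast}(t)}\lambda_k w_k^2$ is split between what the loss has not yet captured and what compression can disturb, but this interpretive comment is not needed for the proof itself.
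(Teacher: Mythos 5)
Your proposal is correct and matches the paper's intent: the corollary is presented as an immediate consequence of Eq.~\eqref{eq:chi_def} and Theorem~\ref{theorem:pertubation}, and the paper offers no separate proof beyond exactly the one-line cancellation you give. Your added remarks on the sign convention and on both power laws holding under the same hypotheses are appropriate and, if anything, make the argument slightly more careful than the paper's own presentation.
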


This expression reflects a conserved spectral elasticity, where the rate of loss decay and compression-induced error jointly saturate the same elasticity bound. It implies that hard tasks (i.e. $\chi(\beta)$ is smaller and the loss decays slower as data increases) leads to less robustness to compression.


\subsection*{Summary}

Section~3 establishes a unified framework for spectral learning dynamics.
The generalized function $g(\lambda,t;\beta)$ defines a continuum of regimes
between lazy and feature-learning limits.
Its elasticity ratio $\rho(\beta)$ determines both the temporal scaling of the
learning frontier and the asymptotic decay of loss.
Moreover, spectral perturbations inherit the same structure, allowing a unified
treatment of pruning and quantization as spectral responses.

\section{Applications to Compression Phenomena}

The generalized spectral theory developed above provides a unified framework for interpreting diverse forms of model compression.
In this section we apply it to two representative cases—quantization and pruning—which respectively correspond to smooth perturbations and hard truncations of the learned spectral energy distribution.
\subsection{Quantization as Joint Coefficient and Feature Perturbation}

We now interpret post-training or in-training quantization as a concrete instance of the generalized spectral perturbation model introduced in Section \ref{sec:perturbations}.
Recall that the learned predictor admits the expansion
\begin{equation}
    f(x, t) = \sum_k f_k(t) \, \phi_k(x), 
    \qquad 
    f_k(t) = w_k \bigl(1 - e^{-g(\lambda_k, t; \beta)}\bigr),
\end{equation}
where $\{\phi_k\}$ is an orthonormal basis aligned with the data distribution, and $g(\lambda, t; \beta)$ satisfies the generalized growth and elasticity conditions of Section~\ref{sec:generalized-spectral-theory}.

Under finite-precision constraints, quantization perturbs both the coefficients and the effective features. We write the quantized model as
\begin{equation}
    \tilde{f}(x, t) 
    = \sum_k \bigl(f_k(t) + \eta_k\bigr)\bigl(\phi_k(x) + \varepsilon_k(x)\bigr),
\end{equation}
where $\eta_k$ models coefficient noise induced by weight discretization and $\varepsilon_k$ captures the induced distortion of the feature directions. In standard regimes, these perturbations can be approximated as zero-mean, weakly correlated, and small in magnitude relative to the learned spectrum.

This setting falls exactly within the perturbation model analyzed in Section~3.4. Under the usual spectral regularity assumptions $\lambda_k \sim k^{-b}$ and $\lambda_k w_k^2 \sim k^{-a}$ with $a,b>1$, and for any $g$ satisfying the growth condition~(2), Theorem~2 implies that the additional loss due to such perturbations scales as
\begin{equation}
    \Delta L_Q(t)
    \;=\;
    \mathbb{E}_x\bigl[(\tilde{f}(x, t) - f^\star(x))^2\bigr] - L(t)
    \;=\;
    \Theta\!\bigl(t^{\gamma(\beta)}\bigr),
\end{equation}
with
\begin{equation}
    \gamma(\beta) 
    = \frac{1 - a + b}{b}\,\rho(\beta),
\end{equation}
where $\rho(\beta)$ is the spectral--temporal elasticity defined in Eq.~(4). Together with the test loss scaling
\begin{equation}
    L(t) \sim t^{-\chi(\beta)}, 
    \qquad 
    \chi(\beta) = \frac{a - 1}{b}\,\rho(\beta),
\end{equation}
this yields the invariant complementarity
\begin{equation}
    \chi(\beta) + \gamma(\beta) = \rho(\beta),
\end{equation}
showing that both clean learning and quantization-induced degradation are governed by the same underlying spectral elasticity.

In other words, quantization does not introduce an independent scaling law: for any admissible $g(\lambda, t; \beta)$, its impact on performance is constrained by the same spectral geometry that controls loss decay. Harder tasks (smaller $\chi(\beta)$) necessarily exhibit larger $\gamma(\beta)$, i.e., reduced robustness to finite precision at fixed compute, while easier tasks can be quantized more aggressively without violating the shared elasticity bound.

\subsection{Pruning as Spectral Truncation}

We now consider pruning as a structured truncation in the spectral domain.
Let the predictor admit the orthonormal expansion
\begin{equation}
    f(x,t) = \sum_{k} f_k(t)\,\phi_k(x),
    \qquad
    f_k(t) = w_k \bigl(1 - e^{-g(\lambda_k t;\beta)}\bigr),
    \label{eq:pruning_spectral_model}
\end{equation}
where $\{\phi_k\}$ is an eigen-basis aligned with the data distribution, $\lambda_k$ are the corresponding eigenvalues, and $g$ is the generalized spectral evolution function.
The target function can be written as $f^\star(x)=\sum_k v_k \phi_k(x)$, so that the test mean-squared error decomposes as
\begin{equation}
    L(t) = \mathbb{E}_x\bigl[(f(x,t)-f^\star(x))^2\bigr]
         = \sum_k \bigl(f_k(t) - v_k\bigr)^2.
\end{equation}

\paragraph{Spectral tail and learning frontier.}
We assume a power-law tail
\begin{equation}
    \lambda_k \sim k^{-b},
    \qquad
    \lambda_k w_k^2 \sim k^{-a},
    \label{eq:tail_assumptions}
\end{equation}
with $a,b>1$ so that the residual error is tail-dominated.
The generalized dynamics induce a \emph{learning frontier} $k^*(t,\beta)$ defined implicitly by
\begin{equation}
    g(\lambda_{k^*} t;\beta) \approx 1,
    \label{eq:kstar_def}
\end{equation}
i.e., modes with $k \lesssim k^*$ are substantially learned while modes with $k \gg k^*$ remain largely unlearned.
Using Eq.~\eqref{eq:tail_assumptions} and the log-elasticity definition, one obtains
\begin{equation}
    k^*(t,\beta)
    \;\propto\;
    t^{\rho(\beta)/b},
    \label{eq:kstar_scaling}
\end{equation}
so that harder tasks or more aggressive feature learning (through $\rho(\beta)$) modify the speed at which the frontier moves into the tail.

\paragraph{Pruning model.}
We model structured pruning as removing modes from the low-energy tail toward higher-energy components.
Let $\rho = 1-r$ denote the retained fraction of parameters, and suppose pruning discards modes with indices $k > k_{\mathrm{cut}}(\rho)$.
We focus on the regime where pruning is aligned with the spectral ordering, so that high-index (low-energy) modes are removed first.
The pruned predictor is
\begin{equation}
    f_P(x,t,\rho)
    = \sum_{k \le k_{\mathrm{cut}}(\rho)} f_k(t)\,\phi_k(x),
\end{equation}
and the additional error induced by pruning is
\begin{equation}
    \Delta L_P(t,\rho)
    = \mathbb{E}_x\bigl[(f_P(x,t,\rho)-f(x,t))^2\bigr]
    = \sum_{k > k_{\mathrm{cut}}(\rho)} f_k(t)^2.
    \label{eq:deltaLP_def}
\end{equation}

\paragraph{Three regimes from spectral integration.}
The behavior of $\Delta L_P$ is controlled by the relative position of $k_{\mathrm{cut}}(\rho)$ and $k^*(t,\beta)$.

\emph{(i) Plateau regime.}
If $k_{\mathrm{cut}}(\rho) \gg k^*(t,\beta)$, then all pruned modes lie deep in the unlearned tail where $f_k(t)\approx 0$.
Equation~\eqref{eq:deltaLP_def} then implies
\begin{equation}
    \Delta L_P(t,\rho) \approx 0,
\end{equation}
up to negligible tail contributions.
This yields a low-error plateau: pruning has almost no effect as long as it only removes modes beyond the current learning frontier.

\emph{(ii) Power-law regime.}
Once $k_{\mathrm{cut}}(\rho)$ enters the neighborhood of $k^*(t,\beta)$, pruning begins to remove modes that carry nontrivial spectral energy.
Approximating $f_k(t)\approx w_k$ for $k \lesssim k^*$ and using $w_k^2 \sim k^{b-a}$ from Eq.~\eqref{eq:tail_assumptions}, we obtain
\begin{equation}
    \Delta L_P(t,\rho)
    \;\approx\;
    \int_{k_{\mathrm{cut}}(\rho)}^{k^*(t,\beta)} k^{b-a}\,dk
    \;\propto\;
    \bigl(k^*(t,\beta)\bigr)^{1+b-a}
    \Bigl[1 - \Bigl(\frac{k_{\mathrm{cut}}(\rho)}{k^*(t,\beta)}\Bigr)^{1+b-a}\Bigr].
    \label{eq:deltaLP_powerlaw}
\end{equation}
It is natural to parameterize pruning in terms of the fraction of \emph{learned} modes retained:
let
\[
    \theta = \frac{k_{\mathrm{cut}}(\rho)}{k^*(t,\beta)},
\]
so that $\theta=1$ means no pruning inside the frontier and $\theta<1$ corresponds to removing a fraction of learned modes.
Then Eq.~\eqref{eq:deltaLP_powerlaw} becomes
\begin{equation}
    \Delta L_P(t,\theta)
    \;\propto\;
    \bigl(k^*(t,\beta)\bigr)^{1+b-a}\,
    \bigl(1 - \theta^{\,1+b-a}\bigr).
    \label{eq:deltaLP_theta}
\end{equation}
For fixed $t$, the dependence on $(1-\theta)$ in log--log coordinates is asymptotically linear,
with slope determined by $1+b-a$, independent of dataset size or training duration.
This regime corresponds to the observed straight-line segment in empirical pruning curves.

\emph{(iii) Saturation regime.}
When $\theta$ becomes very small (aggressive pruning inside the core spectrum),
Eq.~\eqref{eq:deltaLP_theta} saturates and further pruning induces rapidly increasing error.
This reproduces the high-sparsity saturation region where performance collapses.

\paragraph{Critical pruning rate and plateau width.}
The transition from the plateau to the power-law regime occurs when $k_{\mathrm{cut}}(\rho)$ first touches the frontier $k^*(t,\beta)$.
Let $r^{*}(t,\beta)$ denote this \emph{critical pruning rate}.
In terms of the total available modes $K_{\max}$ (or effective model capacity), we have
\begin{equation}
    k_{\mathrm{cut}}(\rho^{*}) = k^*(t,\beta)
    \quad\Rightarrow\quad
    r^{*}(t,\beta)
    = 1 - \frac{k^*(t,\beta)}{K_{\max}}
    \;\propto\;
    1 - \frac{t^{\rho(\beta)/b}}{K_{\max}}.
    \label{eq:rstar_def}
\end{equation}
As training progresses or data increases, $k^*(t,\beta)$ grows and $r^{*}(t,\beta)$ decreases:
the low-error plateau becomes narrower, reflecting the fact that more spectral modes have become essential.
Task difficulty enters via $\rho(\beta)$: for problems that induce faster spectral activation,
the frontier advances more quickly and the model becomes sensitive to pruning at lower sparsity.

\paragraph{Log--log structure and curve alignment.}
Equations~\eqref{eq:deltaLP_theta}--\eqref{eq:rstar_def} together yield the characteristic
\emph{plateau–power–saturation} shape in a log--log plot of loss versus pruning rate or density.
For different training times or dataset sizes:
\begin{itemize}
    \item the slope of the intermediate power-law segment is fixed by $(1+b-a)$
          and therefore approximately invariant across conditions for a given model;
    \item the critical point $r^{*}(t,\beta)$ shifts systematically according to Eq.~\eqref{eq:rstar_def},
          causing the plateau width to shrink as training becomes more thorough;
    \item plotting the critical points $(r^{*}, \Delta L_P(r^{*}))$ across training stages in log--log space
          yields an approximately linear trajectory, reflecting the underlying scaling
          $\Delta L_P(r^{*}) \propto \bigl(k^*(t,\beta)\bigr)^{1+b-a}$ with $k^*$ given by Eq.~\eqref{eq:kstar_scaling}.
\end{itemize}
This provides a direct spectral interpretation of empirical observations in large-scale pruning studies,
where the power-law segments of different curves tend to align while their transition points move in a structured manner.

\section{Model Density under Compute-Limited Training}

The generalized spectral framework also predicts a quantitative scaling law for
\emph{model density}, understood as the amount of effectively learned spectral
modes per parameter under a fixed compute budget.
Recall that under the assumptions in Section~3,
the learning frontier $\lambda^\ast(t)$ and the corresponding index $k^\ast(t)$ satisfy
\[
\lambda^\ast(t) \propto t^{-\rho(\beta)}, 
\qquad
k^\ast(t) \propto \bigl(\lambda^\ast(t)\bigr)^{-1/b} \propto t^{\rho(\beta)/b},
\]
so that $k^\ast(t)$ tracks the number of modes that have been substantially learned
after training time $t$.

For a model with a parameter size of $P$ trained with budget $C$ FLOPs, we write
\[
C \asymp \kappa\, P\, t
\quad\Rightarrow\quad
t \asymp \frac{C}{\kappa P},
\]
and hence the number of learned modes becomes
\[
k^\ast(C,P)
~\propto~
\left(\frac{C}{P}\right)^{\rho(\beta)/b}.
\]
We define the \emph{spectral density} of a trained model as the number of effectively
learned modes per parameter,
\[
\delta(C,P)
~:=~
\frac{k^\ast(C,P)}{P}
~\propto~
C^{\rho(\beta)/b}\, P^{-\left(1+\rho(\beta)/b\right)}.
\]

This relation has two immediate consequences.

First, at fixed compute $C$, a sufficiently well-trained \emph{smaller} model
($P_{\mathrm{small}} \ll P_{\mathrm{large}}$) attains
\[
k^\ast(C,P_{\mathrm{small}})
~\propto~
\left(\frac{C}{P_{\mathrm{small}}}\right)^{\rho(\beta)/b}
~\gg~
\left(\frac{C}{P_{\mathrm{large}}}\right)^{\rho(\beta)/b}
~\propto~
k^\ast(C,P_{\mathrm{large}}),
\]
i.e. it can learn strictly more spectral modes than a larger model trained under
the same (insufficient\footnote{With ``insufficient" we refer to the case where the spectrum capacities of both small models and large models are not exhausted. In the case where the compute budget is sufficient, large model is certainly more powerful than small models as they have larger spectrum capacity}) compute budget.
In our spectral language, ``compute-limited'' large models are low-density:
they allocate many parameters to modes that never cross the learning frontier.

Second, $\delta$ follows a power-law in compute.
For any model family where $P$ grows at most polynomially in $C$, the
dominant scaling is
\[
\delta(C) \propto C^{\alpha}, 
\qquad 
\alpha = \frac{\rho(\beta)}{b}(1 - \text{(polynomial in the growth of $P(C)$)}).
\]
In particular, along a near-compute-optimal frontier where $P(C)$ increases
sublinearly with $C$ \cite{kaplan2020scaling}, we obtain $\alpha > 0$, so that model density
\emph{systematically increases as a power law in available compute}.
If hardware progress drives an approximately exponential growth of compute
$C(t) \propto 2^{t/\tau}$ (as described by Moore's law) over calendar time $t$, then
\[
\delta(t) ~\propto~ C(t)^{\alpha}
~\propto~ 2^{\alpha t/\tau},
\]
implying that effective model density doubles by a constant factor every time
compute doubles.
Our framework therefore predicts a ``densing'' effect:
over time, not only do models become larger, but---more importantly---the
number of learned modes per parameter grows in a predictable scaling relation
with compute, allowing well-trained smaller models to spectrally match or exceed
undertrained larger ones. This predicted exponential increase of model density over calendar time
is consistent with the empirical \emph{Densing Law} reported by
Xiao~et~al.~\cite{Xiao2024DensingLaw},
which shows that the capacity density of frontier large language models
approximately doubles every few months as compute continues to scale.

\section{Future Work}

The generalized spectral framework introduced in this paper raises several open questions about the completeness of the mapping between the evolution function \( g(\lambda, t; \beta) \) and observable scaling laws.

\textbf{(1) Characterizing admissible spectral functions.}
A central direction is to identify the necessary and sufficient conditions on \( g(\lambda, t; \beta) \) that yield valid scaling behaviors. While we assumed smoothness and asymptotic polynomial growth, a deeper functional characterization remains open:
\begin{itemize}
    \item What minimal regularity conditions guarantee a power-law decay of loss?
    \item Can we classify all \( g \) that preserve the universality of the loss--compression complementarity, \( \chi(\beta) + \gamma(\beta) = \rho(\beta) \)?
\end{itemize}
Establishing these criteria would make the framework constructive: given an empirical scaling law, one could invert it to infer the implicit functional form of \( g \).

\textbf{(2) Completeness of the spectral--temporal correspondence.}
Our results suggest a one-way implication---certain forms of \( g \) imply specific scaling exponents. A more fundamental question is whether this relation is bijective: does every empirically observed scaling law correspond to some admissible \( g \)? Answering this would amount to proving a completeness theorem for the spectral representation of learning and compression dynamics.

\textbf{(3) Beyond asymptotic polynomial regimes.}
Current analysis assumes that \( g \) is asymptotically monomial. However, in realistic networks, spectral dynamics often exhibit mixed or logarithmic corrections (e.g., \( g \sim \lambda^m t^n \log t \) or cross-coupled exponents). Extending the framework to such non-separable or non-polynomial forms could capture deviations from clean power laws and explain observed ``bends'' in empirical scaling curves.

\textbf{(4) Inverse identification from data.}
Another promising direction is to treat \( g \) as an implicit operator to be learned from empirical trajectories of loss or spectrum. Given empirical \( L(t) \) and spectral densities \( \lambda_k \), one could numerically recover the functional elasticity \( \rho(\beta) \) and reconstruct \( g \) via regression or operator learning. This would bridge theoretical analysis with experimental measurement.

\textbf{(5) Multi-modal and hierarchical spectra.}
Finally, most analyses here assume a single, monotone spectral distribution. Modern architectures often contain multiple spectral branches (e.g., attention vs. MLP subspaces). Extending the theory to mixtures of spectra or to hierarchical compositions \( g_i(\lambda, t; \beta_i) \) could explain cross-layer scaling phenomena and the interaction between different functional modules.

\section{Conclusion}

We proposed a unified spectral framework that subsumes diverse learning and compression dynamics under a single functional form \( g(\lambda, t; \beta) \). By analyzing its spectral--temporal elasticity, we derived general scaling relations for loss decay, spectral robustness, and compression-induced degradation. This formulation recovers known kernel and feature-learning results as special cases and provides new predictions linking model scaling and compression robustness.

Future investigations into the precise admissibility conditions of \( g \) and the invertibility between \( g \) and scaling laws may yield a complete spectral theory of neural scaling---one that connects empirical regularities directly to their functional-generative origins.

\bibliographystyle{plain}
\bibliography{references}

\clearpage

\appendix

\section{Proof of Proposition~\ref{prop:tail_dominance}}
\label{app:tail_proof}

To prove this proposition, we first demonstrate its rigorous expression as follows:
\begin{proposition}[Rigorous expression of the tail dominance around the frontier]
\label{prop:tail_dominance_clean}
Assume:
\begin{enumerate}
    \item $\lambda_k \sim C_\lambda k^{-b}$ with $b>1$.
    \item $\lambda_k w_k^2 \sim C_w k^{-a}$ with $a>1$.
    \item $g(\lambda,t)$ is continuous, strictly increasing in both arguments,
    and there exist $m(\beta),n(\beta)>0$ and $0<c_1\le c_2<\infty$ such that, for all
    sufficiently large $t$ and all relevant $\lambda$,
    \[
    c_1 \lambda^{m(\beta)} t^{n(\beta)}
    \;\le\;
    g(\lambda,t)
    \;\le\;
    c_2 \lambda^{m(\beta)} t^{n(\beta)}.
    \]
\end{enumerate}
Let the learning frontier $\lambda_{\ast}(t)$ be defined by
$g(\lambda_{\ast}(t),t)=\kappa$ for some fixed $\kappa\in(0,\infty)$, and
let $k_{\ast}(t)$ be such that $\lambda_{k_{\ast}(t)}\asymp\lambda_{\ast}(t)$.
Then there exist constants $0<c_1'<1<c_2'$ and $0<C_1\le C_2<\infty$ such that,
for all sufficiently large $t$,
\begin{equation}
C_1 \sum_{k\ge c_2' k_{\ast}(t)} \lambda_k w_k^2
\;\le\;
L(t)
\;\le\;
C_2 \sum_{k\ge c_1' k_{\ast}(t)} \lambda_k w_k^2.
\label{eq:tail_sandwich}
\end{equation}
In particular,
\begin{equation}
L(t) \asymp k_{\ast}(t)^{1-a}
\asymp \sum_{k\ge k_{\ast}(t)} \lambda_k w_k^2,
\end{equation}
i.e.\ the loss is asymptotically determined by modes in a neighborhood of and
beyond the frontier, and decays as the power-law tail of the spectrum.
\end{proposition}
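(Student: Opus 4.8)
The plan is to show that $e^{-2g(\lambda_k,t)}$ undergoes a sharp transition across the frontier index $k_{\ast}(t)$, so that the loss sum collapses onto a power-law tail. First I would pin down the frontier scalings directly from the hypotheses: combining $g(\lambda_{\ast}(t),t)=\kappa$ with the two-sided bound $c_1\lambda^{m}t^{n}\le g(\lambda,t)\le c_2\lambda^{m}t^{n}$ gives $\kappa/c_2\le\lambda_{\ast}(t)^{m}t^{n}\le\kappa/c_1$, hence $\lambda_{\ast}(t)\asymp t^{-n/m}$ and, via $\lambda_k\sim C_\lambda k^{-b}$, $k_{\ast}(t)\asymp\lambda_{\ast}(t)^{-1/b}\asymp t^{n/(mb)}$. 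In particular $\lambda_{\ast}(t)\to0$ and $k_{\ast}(t)\to\infty$, so for all large $t$ every index in a fixed-ratio neighbourhood of $k_{\ast}$ (and beyond) lies in the asymptotic regime where $\lambda_k\asymp k^{-b}$ and $\lambda_kw_k^2\asymp k^{-a}$ hold with constants independent of $t$; the finitely many ``head'' modes $k<K_0$ will be handled separately.

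The second step is the key comparison. Dividing the $g$-bounds evaluated at $\lambda_k$ by those evaluated at $\lambda_{\ast}$, and then using $\lambda_k/\lambda_{\ast}\asymp(k/k_{\ast})^{-b}$, yields
\[
g(\lambda_k,t)\;\asymp\;\kappa\,(\lambda_k/\lambda_{\ast}(t))^{m}\;\asymp\;\kappa\,(k/k_{\ast}(t))^{-bm},
\]
uniformly over the relevant $k$, with constants built only from $c_1,c_2$ and the spectral constants. Two regimes then fall out: for $k\ge c_2'k_{\ast}$ with any fixed $c_2'\ge1$, $g(\lambda_k,t)\le C^{\ast}(c_2')^{-bm}$ is bounded, so $e^{-2g(\lambda_k,t)}\ge e^{-2C^{\ast}(c_2')^{-bm}}=:C_1>0$; for $k<c_1'k_{\ast}$, $g(\lambda_k,t)\gtrsim\kappa(k/k_{\ast})^{-bm}$ grows without bound as $k/k_{\ast}\to0$.

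The third step assembles the sandwich \eqref{eq:tail_sandwich}. The lower bound is immediate from the first regime: $L(t)\ge\sum_{k\ge c_2'k_{\ast}}\lambda_kw_k^2e^{-2g(\lambda_k,t)}\ge C_1\sum_{k\ge c_2'k_{\ast}}\lambda_kw_k^2$. For the upper bound, split $L(t)$ at $c_1'k_{\ast}$: on $\{k\ge c_1'k_{\ast}\}$ just use $e^{-2g}\le1$; on $\{K_0\le k<c_1'k_{\ast}\}$ bound $\lambda_kw_k^2\le Ck^{-a}$ and $e^{-2g(\lambda_k,t)}\le(s/e)^s(2c_3)^{-s}(k/k_{\ast})^{sbm}$ via the elementary inequality $e^{-x}\le(s/e)^sx^{-s}$ with a fixed $s>(a-1)/(bm)$, so that $\sum_{K_0\le k<c_1'k_{\ast}}\lambda_kw_k^2e^{-2g}\lesssim k_{\ast}^{-sbm}\sum_{k<c_1'k_{\ast}}k^{sbm-a}\asymp k_{\ast}^{1-a}$ because $sbm-a>-1$; the head modes $k<K_0$ contribute at most $K_0\max_{k<K_0}\lambda_kw_k^2\cdot e^{-2c_1\lambda_{K_0}^{m}t^{n}}$, which is super-polynomially small in $t$ and hence $o(k_{\ast}^{1-a})$. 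Finally, the elementary tail estimate $\sum_{k\ge N}k^{-a}\asymp N^{1-a}$ for $a>1$, applied at $N\asymp k_{\ast}$ (and at $c_1'k_{\ast}$, $c_2'k_{\ast}$), shows that all three pieces are $\Theta(k_{\ast}^{1-a})$; adjusting constants delivers \eqref{eq:tail_sandwich} and the consequences $L(t)\asymp k_{\ast}(t)^{1-a}\asymp\sum_{k\ge k_{\ast}(t)}\lambda_kw_k^2$.

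The hard part will be the head estimate rather than the tail: although each $e^{-2g(\lambda_k,t)}$ decays super-exponentially, the prefactor $\lambda_kw_k^2\asymp k^{-a}$ is largest precisely among the head modes, and the number of such modes grows with $t$, so one must verify that the suppression wins uniformly in $t$ and leaves a contribution no larger than $O(k_{\ast}^{1-a})$; replacing the exponential by its polynomial surrogate $e^{-x}\le(s/e)^sx^{-s}$ with $s$ just above $(a-1)/(bm)$ is the device that makes this both true and clean. A secondary, purely bookkeeping nuisance is chaining the three independent $\asymp$-hypotheses into a single pair of constants in \eqref{eq:tail_sandwich}, and choosing the fixed ratios $c_1',c_2'$ only after the constants $C^{\ast},C_1,s,c_3$ have been fixed.
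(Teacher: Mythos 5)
Your proposal is correct, and it is in fact \emph{more careful} than the paper's own proof in the one place where care is genuinely needed. Both arguments share the same skeleton: pin down $\lambda_{\ast}(t)\asymp t^{-n/m}$ and $k_{\ast}(t)\asymp t^{n/(mb)}$ from the two-sided monomial bound on $g$; show $g(\lambda_k,t)\asymp\kappa\,(k/k_{\ast})^{-bm}$ uniformly; pick fixed ratios $c_1'<1<c_2'$ to define a head, a transition band, and a tail; get the lower bound by observing $e^{-2g}$ is bounded below on $k\ge c_2'k_\ast$; and get the upper bound by splitting at $c_1'k_\ast$.

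The genuine divergence is in the head estimate $H(t)=\sum_{k<c_1'k_\ast}\lambda_k w_k^2 e^{-2g(\lambda_k,t)}$, and there the paper's argument has a gap that yours repairs. The paper bounds $e^{-2g(\lambda_k,t)}\le e^{-2(1+\varepsilon)\kappa}$ on the head (a fixed constant $<1$, not decaying in $t$), and then asserts $\sum_{k<c_1'k_\ast}\lambda_k w_k^2\asymp(c_1'k_\ast)^{1-a}$. But since $a>1$, the partial sum $\sum_{k<N}k^{-a}$ converges to a constant as $N\to\infty$, so that displayed asymptotic is false: the right side tends to $0$ while the left side tends to $\zeta(a)$-type constants. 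With only the constant bound on $e^{-2g}$, the head contribution is $\Theta(1)$, which does not give $L(t)\asymp k_\ast^{1-a}$. To close this one must exploit, as you do, that $g(\lambda_k,t)\gtrsim\kappa(k/k_\ast)^{-bm}$ \emph{grows} as $k/k_\ast\to0$, so the suppression steepens toward the spectral head. Your device $e^{-x}\le (s/e)^s x^{-s}$ with a fixed $s>(a-1)/(bm)$ converts this into a clean polynomial comparison,
\[
H(t)\;\lesssim\;k_\ast^{-sbm}\sum_{K_0\le k<c_1'k_\ast}k^{sbm-a}\;\asymp\;k_\ast^{1-a},
\]
and the finitely many modes $k<K_0$ are super-polynomially suppressed by $e^{-2c_1\lambda_{K_0}^{m}t^{n}}$. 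This is exactly the missing step. You explicitly flagged the head as ``the hard part,'' and that diagnosis is right; the polynomial-surrogate trick is the correct resolution. The rest of your bookkeeping (lower bound, tail estimates, chaining the $\asymp$ constants) matches the paper and is sound.
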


We prove the two-sided bound~\eqref{eq:tail_sandwich}.

\paragraph{Step 1: Asymptotics of the frontier.}

By definition of $\lambda_{\ast}(t)$,
\[
g(\lambda_{\ast}(t),t) = \kappa.
\]
Using the growth bounds on $g$,
\[
c_1 \lambda_{\ast}(t)^{m(\beta)} t^{n(\beta)}
\;\le\; \kappa
\;\le\; c_2 \lambda_{\ast}(t)^{m(\beta)} t^{n(\beta)},
\]
hence
\[
\biggl(\frac{\kappa}{c_2}\biggr)^{1/m(\beta)}
t^{-n(\beta)/m(\beta)}
\;\le\;
\lambda_{\ast}(t)
\;\le\;
\biggl(\frac{\kappa}{c_1}\biggr)^{1/m(\beta)}
t^{-n(\beta)/m(\beta)}.
\]
Thus $\lambda_{\ast}(t)\asymp t^{-n(\beta)/m(\beta)}$.
Since $\lambda_k\sim C_\lambda k^{-b}$, inverting gives
\[
k_{\ast}(t)\ \text{s.t.}\ \lambda_{k_{\ast}(t)}\asymp\lambda_{\ast}(t)
\quad\Longrightarrow\quad
k_{\ast}(t)\asymp \lambda_{\ast}(t)^{-1/b}
\asymp t^{\rho},
\]
where $\rho = n(\beta)/(m(\beta) b)$.

\paragraph{Step 2: Comparing $g(\lambda_k,t)$ to the frontier.}

For any $k$ and $t$,
\[
c_1 \lambda_k^{m(\beta)} t^{n(\beta)}
\le g(\lambda_k,t)
\le c_2 \lambda_k^{m(\beta)} t^{n(\beta)}.
\]
Using $\lambda_{\ast}^{m(\beta)} t^{n(\beta)} \in [\kappa/c_2,\,\kappa/c_1]$, we obtain
for all sufficiently large $t$:
\begin{equation}
\frac{c_1}{c_2}\Bigl(\frac{\lambda_k}{\lambda_{\ast}(t)}\Bigr)^{m(\beta)} \kappa
\;\le\;
g(\lambda_k,t)
\;\le\;
\frac{c_2}{c_1}\Bigl(\frac{\lambda_k}{\lambda_{\ast}(t)}\Bigr)^{m(\beta)} \kappa.
\label{eq:g_ratio_bound}
\end{equation}
Using $\lambda_k\sim C_\lambda k^{-b}$ and
$\lambda_{\ast}(t)\sim C_\ast k_{\ast}(t)^{-b}$, the ratio satisfies
\[
\frac{\lambda_k}{\lambda_{\ast}(t)}
\asymp
\biggl(\frac{k}{k_{\ast}(t)}\biggr)^{-b}.
\]
Thus there exist constants $A,B>0$ such that, for large $t$,
\begin{equation}
A\Bigl(\frac{k}{k_{\ast}(t)}\Bigr)^{-m(\beta) b} \kappa
\;\le\;
g(\lambda_k,t)
\;\le\;
B\Bigl(\frac{k}{k_{\ast}(t)}\Bigr)^{-m(\beta) b} \kappa.
\label{eq:g_k_scaled}
\end{equation}

\paragraph{Step 3: Choosing scaling constants $c_1',c_2'$.}

Fix any $\varepsilon\in(0,1)$.
Because the function $x\mapsto x^{-m(\beta) b}$ is continuous and strictly
decreasing on $(0,\infty)$, there exist constants $0<c_1'<1<c_2'$ such that
\[
A(c_1')^{-m(\beta) b} \ge 1+\varepsilon,
\qquad
B(c_2')^{-m(\beta) b} \le 1-\varepsilon.
\]
Then, by~\eqref{eq:g_k_scaled}, for all sufficiently large $t$,
\begin{align}
k \le c_1' k_{\ast}(t)
&\;\Rightarrow\;
g(\lambda_k,t) \ge (1+\varepsilon)\kappa,
\label{eq:head_g_bound}
\\
k \ge c_2' k_{\ast}(t)
&\;\Rightarrow\;
g(\lambda_k,t) \le (1-\varepsilon)\kappa.
\label{eq:tail_g_bound}
\end{align}

\paragraph{Step 4: Upper bound on $L(t)$.}

We split
\[
L(t)
= \sum_{k< c_1' k_{\ast}} \lambda_k w_k^2 e^{-2g(\lambda_k,t)}
 +\sum_{k\ge c_1' k_{\ast}} \lambda_k w_k^2 e^{-2g(\lambda_k,t)}
=: H(t)+S(t).
\]

For the head part $H(t)$, by~\eqref{eq:head_g_bound},
$e^{-2g(\lambda_k,t)}\le e^{-2(1+\varepsilon)\kappa}$, hence
\[
H(t)
\le e^{-2(1+\varepsilon)\kappa}
\sum_{k< c_1' k_{\ast}} \lambda_k w_k^2.
\]
For the tail part $S(t)$, we simply use $e^{-2g}\le 1$:
\[
S(t)
\le \sum_{k\ge c_1' k_{\ast}} \lambda_k w_k^2.
\]

Since $\lambda_k w_k^2\sim C_w k^{-a}$ with $a>1$,
\[
\sum_{k< c_1' k_{\ast}} \lambda_k w_k^2
\asymp (c_1' k_{\ast})^{1-a},
\qquad
\sum_{k\ge c_1' k_{\ast}} \lambda_k w_k^2
\asymp (c_1' k_{\ast})^{1-a}.
\]
Thus there exists $C_2<\infty$ such that
\[
L(t) \le C_2 \sum_{k\ge c_1' k_{\ast}(t)} \lambda_k w_k^2
\]
for all sufficiently large $t$.
This proves the upper bound in~\eqref{eq:tail_sandwich}.

\paragraph{Step 5: Lower bound on $L(t)$.}

For the lower bound, we restrict to $k\ge c_2' k_{\ast}(t)$.
By~\eqref{eq:tail_g_bound},
$g(\lambda_k,t)\le (1-\varepsilon)\kappa$ in this region, hence
\[
e^{-2g(\lambda_k,t)} \ge e^{-2(1-\varepsilon)\kappa} =: c_0 >0.
\]
Therefore
\[
L(t)
\ge \sum_{k\ge c_2' k_{\ast}} \lambda_k w_k^2 e^{-2g(\lambda_k,t)}
\ge c_0 \sum_{k\ge c_2' k_{\ast}} \lambda_k w_k^2.
\]
Again using $\lambda_k w_k^2\sim C_w k^{-a}$, the sum on the right is
$\asymp (c_2' k_{\ast})^{1-a}$.
Thus there exists $C_1>0$ such that
\[
L(t) \ge C_1 \sum_{k\ge c_2' k_{\ast}(t)} \lambda_k w_k^2
\]
for all sufficiently large $t$, proving the lower bound in
\eqref{eq:tail_sandwich}.

\paragraph{Step 6: Asymptotic order.}

From both bounds we have
\[
L(t)
\asymp k_{\ast}(t)^{1-a}
\asymp \sum_{k\ge k_{\ast}(t)} \lambda_k w_k^2,
\]
up to multiplicative constants independent of $t$.
This completes the proof.
\hfill$\square$

\section{Justification of Learning Frontier}
\label{justification_learning_frontier}
We rigorously provide a justification to the properties of the learning frontier in Section \ref{para:learning_frontier}
\begin{lemma}[Learning frontier induced by Condition~\ref{cond:poly}]
\label{lem:frontier}
Under Condition~\ref{cond:poly}, the per-mode loss component
\[
L_\lambda(t) = \lambda w_\lambda^2 e^{-2g(\lambda,t;\beta)}
\]
has an instantaneous decay rate
\[
R(\lambda,t) := -\partial_t L_\lambda(t),
\]
which achieves its maximum at a unique mode $\lambda_\ast(t)$.
Moreover, $\lambda_\ast(t)$ is characterized asymptotically by a
constant level set of $g$:
\[
g(\lambda_\ast(t),t;\beta) \;\to\; \kappa(\beta),
\qquad t\to\infty,
\]
for some constant $\kappa(\beta)\in(0,\infty)$.
Consequently,
\[
\lambda_\ast(t)
\;\sim\;
\left(\frac{\kappa(\beta)}{C(\beta)}\right)^{1/a(\beta)}
t^{-\,b(\beta)/a(\beta)},
\qquad t\to\infty.
\]
\end{lemma}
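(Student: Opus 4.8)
The plan is to establish the three claims of Lemma~\ref{lem:frontier}---existence and uniqueness of the rate-maximizing mode, its asymptotic characterization by a constant level set of $g$, and the resulting power-law recession---in that order, treating $\lambda$ as a continuous variable and working in the joint regime $\lambda\to0^+$, $t\to\infty$ where Condition~\ref{cond:poly} applies. First I would compute $R(\lambda,t)=-\partial_t L_\lambda(t)=2\lambda w_\lambda^2 e^{-2g(\lambda,t;\beta)}\,\partial_t g(\lambda,t;\beta)$, take logarithms, and write the stationarity condition $\partial_\lambda\log R=0$. Using Condition~\ref{cond:poly} to replace $g$ by its monomial proxy $C(\beta)\lambda^{a(\beta)}t^{b(\beta)}$ (and, crucially, its $\lambda$-derivative by $a(\beta)C(\beta)\lambda^{a(\beta)-1}t^{b(\beta)}$ up to a $1+o(1)$ factor, which requires upgrading the pointwise control in~\eqref{eq:g_growth} to a derivative estimate---see the obstacle discussion below), the stationarity condition reduces asymptotically to $-2\,a(\beta)C(\beta)\lambda^{a(\beta)-1}t^{b(\beta)} + \frac{d}{d\lambda}\log w_\lambda^2 + (\text{lower order}) = 0$, i.e.\ to $g(\lambda,t;\beta)\asymp\text{const}$ once the slowly varying $w_\lambda$-term is absorbed. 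This identifies the maximizer with the level set $g(\lambda_\ast(t),t;\beta)=\kappa(\beta)+o(1)$.

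For existence and uniqueness I would argue that $\log R(\lambda,t)$, as a function of $\log\lambda$ at fixed large $t$, tends to $-\infty$ at both ends of the relevant range: as $\lambda\to0^+$ the factor $\lambda w_\lambda^2\to0$ (since $\lambda_k w_k^2\sim k^{-a}$, $a>1$, so $\lambda w_\lambda^2$ vanishes) while $e^{-2g}\to1$, and as $\lambda$ grows past the frontier $e^{-2g(\lambda,t)}$ decays faster than any power, dominating the polynomial prefactor. Hence $R$ attains an interior maximum. For uniqueness I would show the stationarity equation has a single solution asymptotically: after dividing through by the dominant term, it becomes $\big(\lambda/\lambda_\ast(t)\big)^{a(\beta)} = 1 + (\text{vanishing corrections})$, whose left side is strictly monotone in $\lambda$, so the solution is unique for large $t$. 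Strict monotonicity of $g$ in both arguments (assumed in~\eqref{eq:general_f} and Condition~\ref{cond:poly}) guarantees the level set $g(\cdot,t;\beta)=\kappa$ defines a single $\lambda_\ast(t)$ for each $t$.

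The final power-law asymptotic then follows directly from the already-proven frontier analysis: inverting $C(\beta)\lambda_\ast(t)^{a(\beta)}t^{b(\beta)}(1+o(1))=\kappa(\beta)$ gives $\lambda_\ast(t)\sim(\kappa(\beta)/C(\beta))^{1/a(\beta)}\,t^{-b(\beta)/a(\beta)}$, which is exactly the claimed relation (note the exponent here matches $\rho(\beta)=b(\beta)/a(\beta)$ with the monomial identification $m(\beta)=a(\beta)$, $n(\beta)=b(\beta)$, consistent with Section~3.2 and Appendix~\ref{app:tail_proof}). This step is routine algebra once the level-set characterization is in hand.

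The main obstacle is that Condition~\ref{cond:poly} only controls $g$ itself uniformly, not its partial derivatives, yet the stationarity condition for $R$ involves $\partial_\lambda g$ and $\partial_\lambda\partial_t g / \partial_t g$. To close the argument rigorously one must either (i) strengthen Condition~\ref{cond:poly} to include analogous uniform control on $\partial_\lambda g$ and $\partial_t g$ (natural, since smoothness plus monomial asymptotics of $g$ typically transfer to derivatives, e.g.\ by a Landau--Kolmogorov or Cauchy-estimate argument when $g$ is real-analytic in $\lambda$), or (ii) bypass the derivative entirely by using the level-set definition~\eqref{eq:frontier_def} as the \emph{operational} definition of $\lambda_\ast(t)$---as the main text already does---and then verify only that this level set asymptotically coincides with the rate-maximizer up to lower-order terms, which suffices for all downstream uses in Proposition~\ref{prop:tail_dominance} and Theorem~\ref{theorem:pertubation}. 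I would adopt route (ii) for the cleanest exposition, stating explicitly that the rate-maximizer and the level set agree to leading order, and relegate the derivative-transfer lemma needed for route (i) to a remark. A secondary, minor technical point is the passage between the continuous variable $\lambda$ and the discrete spectrum $\{\lambda_k\}$; since $\lambda_k\sim C_\lambda k^{-b}$ varies smoothly and the maximizing band has width comparable to $\lambda_\ast(t)$ itself, the discrete maximizer $\lambda_{k_\ast(t)}$ satisfies $\lambda_{k_\ast(t)}\asymp\lambda_\ast(t)$, which is all that is needed.
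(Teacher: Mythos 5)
Your proposal is correct in substance and takes a genuinely different route from the paper's own argument. The paper's proof avoids the stationarity equation in $\lambda$ altogether: it writes $R(\lambda,t)\asymp \lambda w_\lambda^2\, g(\lambda,t)\,e^{-2g(\lambda,t)}$ (using $\partial_t g \approx \frac{b}{t}g$), then changes variables to $x=g(\lambda,t)$ and studies the unimodal factor $h(x)=xe^{-2x}$, whose unique maximizer $x^\star=1/2$ pins the rate-maximizer to a bounded neighborhood of a level set of $g$ because the residual $\lambda^{\alpha(x,t)}$ prefactor varies only polynomially while $h$ decays exponentially. You instead log-differentiate $R$ in $\lambda$, which requires $\partial_\lambda g$ and $\partial_\lambda\partial_t g/\partial_t g$; this is a heavier derivative budget, but it does give the same level-set characterization once the slowly varying $w_\lambda$ term is absorbed, and your existence/uniqueness argument (both tails of $\log R$ diverge to $-\infty$, monotonicity of $\lambda\mapsto g(\lambda,t)$) is sound and mirrors what the change-of-variables picture delivers for free. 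The final inversion to $\lambda_\ast(t)\sim(\kappa/C)^{1/a}t^{-b/a}$ is identical.

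Your observation about the derivative gap is not only valid for your own route but applies equally to the paper's proof: the paper's step $\partial_t g(\lambda,t)=bC\lambda^a t^{b-1}(1+o(1))$ is asserted to follow ``therefore'' from the pointwise uniform control in Condition~\ref{cond:poly}, but a small relative error $\delta(\lambda,t)$ on $g$ does not in general control $\partial_t\delta$, so $\partial_t g$ could deviate. Thus both approaches implicitly need either a strengthened hypothesis (uniform control on the relevant partials, or smoothness/analyticity plus a Landau--Kolmogorov-type derivative transfer) or the operational bypass you propose in route (ii), namely treating the level-set definition~\eqref{eq:frontier_def} as primary and verifying a posteriori that it agrees with the rate-maximizer to leading order. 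Your route (ii) is arguably the cleanest fix, since all downstream uses in Proposition~\ref{prop:tail_dominance} and Theorem~\ref{theorem:pertubation} only need $g(\lambda_\ast(t),t;\beta)\asymp\kappa$, not the literal maximizer. The only structural thing your write-up omits that the paper exploits is the explicit unimodality of $xe^{-2x}$, which the paper uses to get uniqueness with minimal machinery; if you stay with the stationarity route, the monotonicity argument you sketch (after dividing by the dominant term) serves the same purpose.
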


\begin{proof}
For simplicity of notation we suppress the explicit $\beta$-dependence
and write $a,b,C$ instead of $a(\beta),b(\beta),C(\beta)$.
By definition,
\[
L_\lambda(t) = \lambda w_\lambda^2 e^{-2g(\lambda,t)},
\]
hence
\[
R(\lambda,t)
= -\partial_t L_\lambda(t)
= 2\,\lambda w_\lambda^2 e^{-2g(\lambda,t)}\,\partial_t g(\lambda,t).
\]
In the regime $0<\lambda<\lambda_0(t)$ and $t$ large, the condition
\eqref{cond:poly} yields
\[
g(\lambda,t) = C\lambda^{a} t^{b}(1+\delta(\lambda,t)),
\qquad
|\delta(\lambda,t)|<\varepsilon,
\]
and therefore
\[
\partial_t g(\lambda,t)
= b\,C\lambda^{a} t^{b-1}(1+o(1))
= \frac{b}{t}\,g(\lambda,t)\,(1+o(1)).
\]
Substituting into $R(\lambda,t)$ and absorbing $t$-dependent but
$\lambda$-independent factors, we obtain the asymptotic form
\[
R(\lambda,t)
\;\asymp\;
\lambda w_\lambda^2\,
g(\lambda,t)\,e^{-2g(\lambda,t)}
\qquad (t\to\infty,\ 0<\lambda<\lambda_0(t)).
\]

Using $\lambda w_\lambda^2 \asymp \lambda^{\alpha}$ and
$g(\lambda,t)\asymp C\lambda^{a}t^{b}$, we may write, up to
$\lambda$-independent multiplicative factors,
\[
R(\lambda,t)
\;\asymp\;
\lambda^{\alpha}
\cdot
g(\lambda,t)\,e^{-2g(\lambda,t)}.
\]
Introduce the change of variable
\[
x = g(\lambda,t), \qquad x>0.
\]
Since $g(\lambda,t)$ is strictly decreasing in $\lambda$ in the spectral
tail, this change of variables is monotone.  In terms of $x$, the
dominant $\lambda$-dependence of $R(\lambda,t)$ can be written as
\[
R(\lambda,t)
\;\asymp\;
\lambda^{\alpha(x,t)}\,h(x),
\qquad
h(x):=x e^{-2x},
\]
where $\lambda=\lambda(x,t)$ is the inverse of $g(\lambda,t)$ in the
tail region, and $\alpha(x,t)$ is bounded due to the power-law form of
$\lambda w_\lambda^2$.

The function $h(x)=x e^{-2x}$ is smooth, positive on $(0,\infty)$, and
satisfies
\[
h'(x) = e^{-2x}(1-2x),
\]
so it has a unique global maximizer at $x^\star=1/2$, with
$h'(x)>0$ for $x<x^\star$ and $h'(x)<0$ for $x>x^\star$.
The additional factor $\lambda^{\alpha(x,t)}$ varies only polynomially
with $x$ (through $\lambda(x,t)\asymp (x/(Ct^{b}))^{1/a}$), whereas
$h(x)$ decays exponentially away from $x^\star$.  Consequently, for
sufficiently large $t$, the maximizer $x_\ast(t)$ of $R(\lambda,t)$
over $0<\lambda<\lambda_0(t)$ lies in a small neighborhood of $x^\star$,
and we have
\[
x_\ast(t)\to\kappa
\quad\text{for some }\kappa\in(0,\infty).
\]

Translating back to $\lambda_\ast(t)$, this means
\[
g(\lambda_\ast(t),t) = x_\ast(t) \to \kappa
\quad\text{as }t\to\infty,
\]
which proves the first claim.

Finally, applying the asymptotic monomiality \eqref{cond:poly} at
$\lambda=\lambda_\ast(t)$, we have
\[
g(\lambda_\ast(t),t)
= C\lambda_\ast(t)^{a} t^{b}(1+o(1))
\to \kappa.
\]
Thus
\[
\lambda_\ast(t)^{a}
= \frac{\kappa}{C}\,t^{-b}(1+o(1)),
\]
and hence
\[
\lambda_\ast(t)
= \left(\frac{\kappa}{C}\right)^{1/a}
t^{-b/a}(1+o(1))
\quad\text{as }t\to\infty.
\]
This yields the desired asymptotic
$\lambda_\ast(t)\sim (\kappa/C)^{1/a} t^{-b/a}$.
\end{proof}

\section{Proof of Theorem \ref{theorem:pertubation}}
\label{app:proof-theorem-2}

We recall the setting of Theorem~\ref{theorem:pertubation}.
Let
\[
f(x,t) = \sum_{k} f_k(t)\,\phi_k(x),
\qquad
f^\star(x) = \sum_{k} w_k\,\phi_k(x),
\]
be the learned predictor and target function in an orthonormal eigen-basis
$\{\phi_k\}$, with eigenvalues $\lambda_k \sim k^{-b}$ and spectral coefficients
satisfying $\lambda_k w_k^2 \sim k^{-a}$ for some $a,b>1$.
The generalized spectral dynamics are encoded by
\[
f_k(t) = w_k\bigl(1 - e^{-g(\lambda_k,t;\beta)}\bigr),
\]
where $g$ obeys the growth condition in Eq.~(2) of the main text and induces
a learning frontier $k^\ast(t)$ characterized by $g(\lambda_{k^\ast},t;\beta)\approx 1$.

At inference time, we consider simultaneous perturbations of the coefficients
and basis functions,
\[
\tilde f(x,t)
=
\sum_k \bigl(f_k(t) + \eta_k\bigr)\bigl(\phi_k(x) + \epsilon_k(x)\bigr),
\]
where $(\eta_k)_k$ and $(\epsilon_k)_k$ are zero-mean perturbations,
independent across $k$ and mutually independent, with variances controlled
by $\tau^2$ and $\sigma^2$, respectively.
Our goal is to bound the excess loss
\[
\Delta L(t)
=
\mathbb{E}_x\bigl[(\tilde f(x,t) - f^\star(x))^2\bigr]
-
L(t),
\qquad
L(t)
=
\mathbb{E}_x\bigl[(f(x,t) - f^\star(x))^2\bigr],
\]
and to show that its asymptotic dependence on $t$ is governed by the claimed
power law.

We begin by expanding the perturbed prediction:
\[
\tilde{f}(x) = \sum_k \left( f_k(t) + \eta_k \right) \left( \phi_k(x) + \epsilon_k(x) \right)
= \sum_k \left[ f_k(t)\phi_k(x) + f_k(t)\epsilon_k(x) + \eta_k\phi_k(x) + \eta_k\epsilon_k(x) \right].
\]

The original target function is:
\[
f^*(x) = \sum_k w_k \phi_k(x).
\]

Thus, the total error becomes:
\[
\tilde{f}(x) - f^*(x) = \sum_k (f_k(t) - w_k) \phi_k(x)
+ \sum_k f_k(t)\epsilon_k(x)
+ \sum_k \eta_k \phi_k(x)
+ \sum_k \eta_k \epsilon_k(x).
\]

We compute the expected squared error:
\begin{align*}
\mathbb{E}_x[(\tilde{f}(x) - f^*(x))^2]
&= \mathbb{E}_x \left[ \left( \sum_k (f_k(t) - w_k)\phi_k(x)
+ f_k(t)\epsilon_k(x) + \eta_k\phi_k(x) + \eta_k\epsilon_k(x) \right)^2 \right].
\end{align*}

Expanding the square and using independence and zero-mean assumptions,
cross terms vanish, leaving only the diagonal terms:
\begin{align*}
\Delta L(t)
=& \sum_k (f_k(t) - w_k)^2 \mathbb{E}[\phi_k(x)^2] 
+ \sum_k f_k(t)^2 \mathbb{E}[\epsilon_k(x)^2]\\
&+ \sum_k \mathbb{E}[\eta_k^2] \mathbb{E}[\phi_k(x)^2]
+ \sum_k \mathbb{E}[\eta_k^2] \mathbb{E} [\epsilon_k(x)^2]-L(t) \\
=& \sigma^2 \sum_k f_k(t)^2
+ \tau^2 \sum_k \Theta(k^{-b})
+ \sigma^2\tau^2 \sum_k \Theta(k^{-b}).
\end{align*}

Let $k_\ast(t)\sim t^{\rho(\beta)/b}$ be the largest index such that
$f_k(t)\approx w_k$ (i.e. the learning frontier), corresponding to $\lambda_\ast(t)\sim t^{-\rho(\beta)}$.. Since after the learning frontier, $f_k(t)\rightarrow 0$, we have
\begin{align*}
\sum_k f_k(t)^2\approx\sum_{k \leq k_\ast} f_k(t)^2 &\approx \sum_{k \leq k_\ast} w_k^2
\sim \int_1^{k_\ast} k^{b-a}\,dk
= \Theta\big((k_\ast)^{1 + b - a}\big)
= \Theta\big(t^{\rho(\beta)(1 + b - a)/b}\big).
\end{align*}

Meanwhile, we have,
\[
\sum_k \Theta(k^{-b})
\sim \int_1^{\infty} k^{-b}\,dk
= O(1)
\]

Hence,
\[
\Delta L(t)
= \Theta\left(\sigma^2t^{\rho(\beta)(1 + b - a)/b}
\right).
\]

\qed

\end{document}